\documentclass[10pt]{amsart}
\usepackage{egastyle}

\usepackage{graphicx}
\usepackage{subcaption}
\usepackage{caption}
\usepackage{tikz,tikzcd}
\usepackage{verbatim}
\usepackage{adjustbox}
\usepackage{multicol,xcolor}

\usepackage{tabularx}

\usepackage{algorithm}
\usepackage{algorithmic}

\usepackage{float}          % in preamble

\usepackage{xurl}
\DeclareUrlCommand\scriptname{\urlstyle{tt}}

\newcommand{\stg}{{\mathrm{stg}}}

\usepackage{amssymb,amsmath,amscd,amsthm,amsfonts,wasysym,mathrsfs,amsxtra}
\input xy
\xyoption{all}
\usepackage[hidelinks,breaklinks=true]{hyperref}

\begin{document}

%\twocolumn[\title{Phase II results}

\title{Wiring diagram extraction and gluing: a case study in classifying figure skating jumps using 3D dataset}

\author{Jason Lo}
\address{Department of Mathematics, California State University, Northridge, USA}
\email{jason.lo@csun.edu}

\author{Mohammadnima  Jafari}
\address{Department of Computer Science, California State University, Northridge, USA}
\email{mohammadnima.jafari@csun.edu}

\date{\today}

\begin{abstract}
Hasse clustering is an algorithm developed in \cite{LJ-D2WD} that extracts common patterns in sequential data and represents them in graphical forms.  As the number of expected clusters grows, however, the algorithm can become infeasible to run due to combinatorial complexity.  In this article, we describe a theory of gluing wiring diagrams, allowing iterative applications of Hasse clustering to achieve the same result as a single application.  We test our theory in the context of classifying videos of  figure skating jumps.
\end{abstract}

\maketitle

\tableofcontents

\begin{multicols}{2}

\section{Introduction}

\paragraph[The problem] Hasse clustering is an algorithm that extracts common patterns from sequences and represents the patterns as graphs called \emph{wiring diagrams} \cite[Algorithm 6.7]{LJ-D2WD}.  A wiring diagram can be thought of as a directed graph where each vertex is labeled with an abstract concept, and the arrows in the graph indicate the order relations among the concepts \cite{LoMR1}.  For example, the following wiring diagram, where the vertex labels are  movement characteristics shown by a figure skater, can act as a representation of the axel jump:

\begin{center}
\begin{minipage}{\columnwidth}
\centering

\begin{tikzcd}
    {\fbox{\shortstack{\footnotesize skating\\ \footnotesize forward}}}
    &
    {\fbox{\shortstack{\footnotesize takeoff without\\ \footnotesize toe assist}}}
    \\
    {\fbox{\shortstack{\footnotesize leaning on the\\\footnotesize ``outside'' edge}}} 
    & 
       {\fbox{\shortstack{\footnotesize landing on a foot \\\footnotesize  different from takeoff}}}
    \arrow[from=1-1, to=1-2]
    \arrow[from=1-2, to=2-2]
    \arrow[from=2-1, to=1-2]
\end{tikzcd}

\captionof{figure}{Defining features of an axel jump.}
\label{fig:intro-axelWD}

\end{minipage}
\end{center}
When the number $r$ of distinct patterns within the data is large, or when the number of possible terms that appear in the sequences is large, the complexity of Hasse clustering grows combinatorially, making it unfeasible to run when hardware is limited to  a personal computer.

In this paper, we give two strategies for mitigating the limitations imposed by this combinatorial complexity.  First, we develop a theory of `gluing'   wiring diagrams that result from multiple iterations of  Hasse clustering.  Second, we give an updated version of \cite[Algorithm 6.7]{LJ-D2WD} that generates fewer Hasse diagrams along the way, thereby reducing the amount of RAM needed to run Hasse clustering.

\paragraph[Testing our theory]\label{para:intro-testingth} To test our theory of gluing wiring diagrams, we consider the problem of classifying jumps in figure skating.  In figure skating, there are 6 types of jumps: toe loop, flip, lutz, salchow, loop, and axel.  These jumps are distinguished by four pairs of technical features \cite{bgsu_aspire_jumps,grant_britannica_jumps}:
\begin{itemize}
    \item[(i)] Forward/backward direction: During the approach to the jump, is the skater facing forward or backward relative to the direction of travel?
    \item[(ii)] Inside/outside edge: During the approach to the jump, is the skater leaning on the ``inside'' or the ``outside'' edge of the blade that is in contact with the ice?
    \item[(iii)] Toe/edge jump: Does the skater use the ``free leg'' to tap the ice on their toes to assist with takeoff?  A jump is called a toe jump if such a move - called a `toe pick'-  occurs, and an edge jump otherwise.
    \item[(iv)] Same/different landing foot: Does the skater land on a foot that is the same, or different, from the foot that was last in contact with the ice before takeoff? 
\end{itemize}
An ideal jump involves the occurrence of one feature from each of the four pairs above, arranged in a certain configuration.  For example, Figure \ref{fig:intro-axelWD} shows the configuration for an axel.

In principle, if an autonomous system can recognize the aforementioned technical features of a jump from videos of skaters, it can translate each video into a sequence of features, and then Hasse clustering would be able to sort a collection of videos into clusters according to the detected configurations of the  features.  In other words, if an autonomous system already has the ability to detect lower-level concepts such as individual technical features of a jump, it would be able to identify and learn higher-level concepts (jumps) through Hasse clustering.

In testing our theory of gluing wiring diagrams, we used the FS-Jump3D dataset accompanying \cite{10.1145/3689061.3689077}.  This dataset contains 240 videos and corresponding 3D motion-capture data of 4 athletes performing all 6 types of jumps.  Running Hasse clustering on all 240 videos with $r=6$ (since we expect 6 clusters, one for each jump), however, is unfeasible.  As a result, we first applied Hasse clustering to the 240 videos using only three of the four pairs of technical features, leaving out the inside/outside edge pair; this allowed us to run Hasse clustering with $r=5$, with the downside being flip and lutz jumps were indistinguishable and formed a single cluster.  In the second iteration of Hasse clustering, we again used three pairs of technical features, but this time including the inside/outside edge pair; this allowed us to separate the flip-lutz combined cluster into two separate clusters that correspond to flip and lutz, respectively.  

We then glued the wiring diagrams  from the two iterations of Hasse clustering to recover the full wiring-diagram representations of flip and lutz jumps.  Overall, we produced 6 clusters from the 240 videos - with one cluster corresponding to each type of jump - and obtained the full wiring diagrams for flip and lutz.  (We could have applied more iterations of Hasse clustering to recover the full wiring diagrams of toe loop, salchow, loop, and axel as well, but it was not necessary since our primary goal was to sort the videos into 6 clusters and to demonstrate the process of gluing.)
 
\paragraph[Outline of this article] In Section \ref{sec:prelim-WDs}, we recall some basic definitions and results surrounding  wiring diagrams from \cite{LoMR1,LJ-D2WD}. In Section \ref{sec:HCupdated}, we give the pseudo-code for an updated version of Hasse clustering (Algorithm \ref{algo:HC-v2}), and prove a result on the absence of certain cycles in a graph that forms the backbone of the clustering process.  In Section \ref{sec:intro-FSjumps}, we explain in detail how different jumps in figure skating can be represented as wiring diagrams.  

In Section \ref{sec:HConFS}, we describe the process of applying iterative Hasse clustering to the FS-Jump3D  dataset and state the results.  In Section \ref{sec:theory-gluingWDs}, we give the theoretical foundation for gluing wiring diagrams from iterative Hasse clustering.  Theorem \ref{thm:main-01} is our main mathematical result - it says that under the right conditions, gluing together wiring diagrams from iterations of Hasse clustering produces the same result as applying Hasse clustering just once (if allowed by hardware).  

In Section \ref{sec:maineg}, we give an extended example that explains how the theory in Section \ref{sec:theory-gluingWDs} can be implemented in practice, and explain the outcome when the theory is applied to the results from our experiment in Section \ref{sec:HConFS}.  We end the article with Section \ref{sec:conclusion}, where we give a quick summary of this article followed by remarks on further applications of our framework of wiring diagrams.

\section{Wiring diagrams}\label{sec:prelim-WDs}

\paragraph[Graph theory - notation and terminology]  In this article, we will think of a directed graph $G$ as a quadruple $(V, A, s, t)$ where $V$ is the set of vertices, $A$ is the set of arrows, and $s, t : A \to V$ are the source and target functions, respectively, such that an arrow $a \in A$ points from $s(a)$ to $t(a)$.

\paragraph[Category theory]  A \emph{category} in mathematics is a collection of \emph{objects} and \emph{morphisms} that satisfies a list of axioms.  The reader may refer to references such as \cite{Awodey,SpivakCTS,maclane:71} for an introduction to category theory.  For this article, the reader may simply think of the objects of a category $\mathcal{O}$ as the elements of a set, and a morphism in $\mathcal{O}$ as an arrow that points from one object to another.  We will abuse notation and write $\mathcal{O}$ to denote the set of objects in $\mathcal{O}$. (We will always assume $\mathcal{O}$ is a `small' category.)  There are rules governing morphisms in a category, including:
\begin{enumerate}
    \item Whenever there is a morphism $f$ from object $x$ to object $y$, and a morphism $g$ from object $y$ to object $z$, we can compose these morphisms and obtain a morphism $gf$ from $x$ to $z$.
    \item   For every object $x$ in $\mathcal{O}$, there is an \emph{identity morphism} $1_x$ from $x$ to itself.  The identity morphisms satisfy  the following property: for every morphism $f$ in $\mathcal{O}$, say from $x$ to $y$, we have $f1_x = f = 1_yf$.
\end{enumerate}

The category of sets, denoted $\underline{\text{set}}$, is a category in which  objects are sets and  morphisms are functions.  The identity morphism $1_S$ for a set $S$ is simply the identity function from $S$ to itself.  Using category theory gives us a formal language for comparing directed graphs and the concepts they represent.

\paragraph[Wiring diagram graphs]  A \emph{wiring diagram}, as defined in \cite[Definition 2.1]{LJ-D2WD}, is a directed graph where each vertex is labeled with a sensor reading.  By encoding simpler concepts and the relations among them, wiring diagrams provide a framework for representing complex concepts \cite{LoMR1}. In this article, we adopt the following definition of a wiring diagram, which is slightly more general than \cite[Definition 2.1]{LJ-D2WD}.  Note that this definition uses the observation \cite[Remark 2.3]{LJ-D2WD} that the underlying graph of a wiring diagram is precisely a directed acyclic graph (DAG).

\begin{defn}[Wiring diagram]
Given a category $\mathcal{O}$, a \emph{wiring diagram} (WD)  over $\mathcal{O}$ is a quintuple $W = (V, A, s, t, L)$ where $(V, A, s, t)$ is a directed graph, called the \emph{underlying graph} of $W$, and $L$ is a function from $V$ to the set of objects in $\mathcal{O}$.  That is, a wiring diagram $W$ is a directed graph in which every vertex $v$ is labeled with an object $L(v)$ in $\mathcal{O}$.  A directed graph is called a \emph{wiring diagram graph} if it is the underlying graph of a wiring diagram.
\end{defn}

\subparagraph[How to read a wiring diagram] Given a wiring diagram $W$ as above, the label $L(v)$ at a vertex $v$ should be thought of as an event.  Given an arrow $a$ from vertex $x$ to vertex $y$ in $W$, we should think of the event $L(x)$ as having to occur before the event $L(y)$.  For example,  the wiring diagram in Figure \ref{fig:intro-axelWD} says that before taking off without a toe assist, the skater should be skating on a forward outside edge; then, the skater should land on the opposite foot from takeoff.  Altogether, these events with the described order make up the concept of an axel jump.

Following \cite[Definition 2.4]{LJ-D2WD}, we say a WD graph $G$ is \emph{quasi-skeleton} if it satisfies  two conditions:
\begin{itemize}
    \item[(a)] For any two distinct vertices $v, v'$ in $G$, there is at most one arrow from $v$ to $v'$.
    \item[(b)] For any two distinct vertices $v, v'$ in $G$, if there is a path of length at least 2 from $v$ to $v'$, then there is no arrow  from $v$ to $v'$.
\end{itemize}
The authors proved in \cite[Theorem 3.7]{LJ-D2WD} that a  graph is a quasi-skeleton WD graph if and only if it is the transitive reduction of a poset (i.e.\ a Hasse diagram).

\paragraph[The category of quasi-skeleton WD graphs] Quasi-skeleton WD graphs with the same set of vertices form a category \cite[Definition 4.2]{LJ-D2WD}, constructed as follows:  Suppose $J$ is a finite set.  For any quasi-skeleton WD graph $G$ whose set of vertices is $J$, we define the set
\[
 R'(G) := \{  (x,y) \in J \times J : \text{ there is an arrow } x \to y \text{ in } G \}.
\]
We then define $R(G)$ to be the set obtained by forcing transitivity and reflexivity on $R'(G)$.  Equivalently,
\begin{multline*}
R(G) = \{ (x,y) \in J \times J :  \text{ there is a path } x \to y \text{ in } G \} \\
\cup \{ (x,x) : x \in J\}.
\end{multline*}
Then we define   $\mathcal{R}(J)$ to be the category where the objects are the quasi-skeleton WD graphs whose set of vertices is $J$, and where there is a morphism $G_1 \to G_2$ whenever $R(G_2) \subseteq R(G_1)$.  

Intuitively, there is a morphism $G_1 \to G_2$ in the category $\mathcal{R}(J)$ whenever the WD graph $G_1$ ``generalizes'' to $G_2$, in the sense that $G_2$ contains less information than $G_1$, but all the information in $G_2$ is  consistent with that within $G_1$.

\paragraph[Operations on sequences and graphs]\label{para:SeeqGrOps}  We recall some more terminology and constructions in \cite{LJ-D2WD} that will be used in the rest of this article.

Let $X$ be a finite set.  If $s$ is a sequence in which all the terms are elements of $X$, then we say $s$ is a \emph{sequence in $X$}.  We say $s$ is a \emph{simple} sequence if  all the terms in the sequence are distinct.  We also say a wiring diagram $W=(V, A, s, t, L)$ over $\mathcal{O}$ is \emph{simple} if the function $L : V \to \mathcal{O}$ is injective, i.e.\ all the vertex labels in $W$ are distinct.

Given a finite set $X$, a sequence $s$ in $X$ and a subset $J$ of $X$, we write $m_J(s)$ to denote the sequence obtained from $s$ by omitting all the terms that are \emph{not} in $J$.

Given a set $V$ and a subset $S \subseteq V \times V$, we can associate to $S$ a directed graph $G$ with set of vertices $V$ and where there is an arrow $x \to y$ in $G$ if and only if $(x,y) \in S$.   Also, we say  a subset $P \subseteq V \times V$ is a poset if the relation $\preceq$ on $V$ defined by 
\[
x \preceq y \text{ if and only if } (x,y) \in P
\]
gives a partial order on $V$.  Note that whenever $P \subseteq V \times V$ is a poset, the graph associated to it is a DAG provided we ignore all the loops that arise from reflexivity of the partial order.

In Definition \ref{def:graphrestrictiontosubsest} below, by the `transitive reduction of a poset', we mean the transitive reduction (in the sense of \cite[Section 1]{aho1972transitive}; see also \cite[Definition 3.3]{LJ-D2WD}) of the graph associated to the poset.

\begin{defn}[restriction of a graph]\cite[Definition 5.13]{LJ-D2WD}\label{def:graphrestrictiontosubsest}
Suppose $G$ is a quasi-skeleton WD graph with set of vertices $V$.  For any subset $U \subseteq V$, we define the graph $G_U$ to be the transitive reduction of the poset
\[
    R(G) \cap (U \times U)
\]
and refer to it as the \emph{restriction of $G$ with respect to $U$}.
\end{defn}

As a special case of \cite[Definition 5.9]{LJ-D2WD}, given a simple sequence 
\[
s: s_1, s_2, \cdots, s_n
\]
in a finite set $X$, we write $\stg (s)$ to denote the path graph
\[
s_1 \to s_2 \to \cdots \to s_n
\]

The following is a slight modification of \cite[Definition 5.6]{LJ-D2WD}: 

\begin{defn}[consistent wiring diagram]
Let $s$ be a simple sequence in a finite set $X$, and $G$ a quasi-skeleton WD graph where all the vertices lie in $X$.  We say $s$ is \emph{consistent} with $G$, or that $G$ is \emph{consistent} with $s$ if, for any vertices $x, y$ in $G$ such that there is a path from $x$ to $y$ in $G$, both $x$ and $y$ appear in $s$ and $x$ appears earlier than $y$ in $s$.
\end{defn}

The notion of a flattening will be useful when we compare path WD graphs (e.g.\ $\stg (-)$ of simple sequences) to non-path WD graphs:

\begin{defn}\cite[Definition 5.18]{LJ-D2WD}\label{def:flattening-graph}
Suppose $G$ is a quasi-skeleton WD graph with set of vertices $V$.  We say a graph $P$ is a \emph{flattening} of $G$ if the following conditions are satisfied:
\begin{itemize}
    \item[(i)] $P$ is a path graph.
    \item[(ii)] The set of vertices of $P$ is also $V$.
    \item[(iii)] There is a morphism $P \to G$ in the category $\mathcal{R}(V)$.
\end{itemize}
\end{defn}
The intuition is that every morphism from a path quasi-skeleton WD graph $G'$ to a quasi-skeleton WD graph $G$ must factor through a flattening of $G$ (see \cite[Proposition 5.16, Theorem 5.23]{LJ-D2WD}).

\section{Hasse clustering: updated  algorithm}\label{sec:HCupdated}

In \cite[Algorithm 6.7]{LJ-D2WD}, the authors introduced an algorithm referred to as \emph{Hasse clustering} that extracts  patterns from data.  The input of the algorithm includes sequences, while the output consists of wiring diagrams that represent common themes within the data.  We present here Algorithm \ref{algo:HC-v2}, an updated and more streamlined version of Hasse clustering.

\end{multicols}

\begin{algorithm}[H]
    \raggedright
    \caption{An updated version of Hasse clustering \cite[Algorithm 6.7]{LJ-D2WD}}
    \label{algo:HC-v2}
    \textbf{Input}:   a finite set $X$, a collection of simple sequences $s^{(1)}, ..., s^{(p)}$ in $X$, a subset  $J=\{e_1, \cdots, e_m\}$  of $X$, a real number $t \in [0,100]$, and a positive integer $r$.  \\
    %\textbf{Parameter}: Optional list of parameters\\
    \textbf{Output}: a finite set $\mathcal{C} = \{C_1, \cdots, C_l\}$ such that: (i)  each $C_i$ is a nonempty set of size at most $r$; (ii)  each element of each $C_i$ is a quasi-skeleton WD graph with vertex set $J$.
\\
     \textbf{Notation}: Below, we will  write $\mathcal{R}(J)$ to denote the set of objects in that category.
    \begin{algorithmic}[1] %[1] enables line numbers
       \STATE For each $1 \leq i \leq p$, let $S_i$ be the set of all quasi-skeleton WD graphs $G$ with vertex set $J$ satisfying: 
       if there is a path from $e_a$ to $e_b$ in $G$ (where $a \neq b$), then $e_a, e_b$ must both appear in the sequence $s^{(i)}$ and $e_a$ must appear before $e_b$.  (In other words, $s^{(i)}$ is consistent with $G$.) \label{alg:HC-v2:line-1}  %[also rephrase this using the notation in D2WD like $m_J(-)$ etc.?]
       \STATE Set $S = \cup_{i=1}^p S_i$.
       \STATE Find all nonempty subsets $\overline{C}$ of $S$ such that: (i) $|\overline{C}| \leq r$; (ii) given any two distinct elements $G_1, G_2 \in \overline{C}$, there are no morphisms between $G_1$ and $G_2$ in $\mathcal{R}(J)$; (iii) the proportion of indices $i$ such that $S_i$ contains at least one element of $\overline{C}$ is at least $t\%$, i.e.\
       \[
       \frac{\left| \{ i \in \mathbb{Z} : 1 \leq i \leq p, S_i \cap \overline{C} \neq \varnothing \}\right|}{p} \geq t\% .
       \] \label{alg:HC-v2:line-3}
       \STATE Let $\mathcal{G}$ be a directed graph where the vertices are the subsets $\overline{C}$ identified in line \ref{alg:HC-v2:line-3}.  Given two distinct vertices $\overline{C}', \overline{C}''$ in $\mathcal{G}$, declare an arrow from $\overline{C}'$ to $\overline{C}''$ if, for every $G' \in \overline{C}'$, there exists some $G'' \in \overline{C}''$ such that there is a morphism from $G'$ to $G''$ in the category $\mathcal{R}(J)$. \label{alg:HC-v2:line-4}
       \STATE The output $\mathcal{C}$ will be the set of all vertices $\overline{C}$ in $\mathcal{G}$ such that there are no arrows going into $\overline{C}$ in $\mathcal{G}$. \label{alg:HC-v2:line-5}
    \end{algorithmic}
\end{algorithm}

\begin{multicols}{2}

\paragraph There are three main differences between this version of Hasse clustering (Algorithm \ref{algo:HC-v2}) and its predecessor \cite[Algorithm 6.7]{LJ-D2WD}:
\begin{enumerate}
    \item In the previous version, each element of the output set $\mathcal{C}$ is a collection of path matrices of DAGs.  In the current version, each element of the output $\mathcal{C}$ is a collection of DAGs. 
    \item In the previous version, the algorithm would generate \emph{all} the quasi-skeleton WD graphs   with vertex set $J$ and then, for each such graph $G$, identify the set of all sequences $s^{(i)}$ such that $G \in S_i$.  This approach leads to memory demand that grows exponentially as the size of $J$ increases (e.g.\ see the table in \cite[Section 6.8]{LJ-D2WD}); the current approach considers only quasi-skeleton WD graphs that are generalizations of the $\stg (s^{(i)})$, thus reducing  demand on memory.
    \item In this version, we require condition (ii) in line \ref{alg:HC-v2:line-3} of the algorithm, whereas this condition is absent in the previous version.  Informally, condition (ii) demands that we exclude any set $\overline{C}$ that contains redundant information: if $G_1, G_2$ are two objects in $\mathcal{R}(J)$ with a morphism  between them, say $\alpha: G_1 \to G_2$ and $G_1, G_2$ both lie in $\overline{C}$, then $R(G_2)$ is a subset of $R(G_1)$ and $\overline{C}$ would not appear as a vertex in the graph $\mathcal{G}$ constructed in line \ref{alg:HC-v2:line-4}.
\end{enumerate}

\emph{A priori}, we do not know if the directed graph $\mathcal{G}$ constructed in Algorithm \ref{algo:HC-v2} is acyclic.  Nonetheless, we can show that any directed cycle that exists must not go through vertices $\overline{C}$ with cardinality $r$ as a set.

\begin{lem}
In the graph $\mathcal{G}$ from line \ref{alg:HC-v2:line-4} of Algorithm \ref{algo:HC-v2}, a directed cycle cannot contain any vertex $\overline{C}$ with $|\overline{C}|=r$.
\end{lem}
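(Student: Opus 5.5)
The plan is to argue by contradiction. Suppose $\mathcal{G}$ has a directed cycle
\[
\overline{C}_0 \to \overline{C}_1 \to \cdots \to \overline{C}_k = \overline{C}_0,
\]
with $k \geq 2$ and $\overline{C}_0, \ldots, \overline{C}_{k-1}$ pairwise distinct. Arrows in $\mathcal{G}$ only join distinct vertices, so there are no loops. After rotating the cycle, assume $|\overline{C}_0| = r$. I will show that every element of $\overline{C}_0$ lies in every $\overline{C}_i$. Since $|\overline{C}_i| \leq r$, this forces $\overline{C}_i = \overline{C}_0$ for all $i$, contradicting distinctness.

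First I record a fact about $\mathcal{R}(J)$: morphisms form a partial order on its objects. Reflexivity and transitivity are immediate from the definition via inclusions $R(G_2) \subseteq R(G_1)$. For antisymmetry, suppose there are morphisms $G \to H$ and $H \to G$. Then $R(G) = R(H)$. By \cite[Theorem 3.7]{LJ-D2WD}, a quasi-skeleton WD graph is the transitive reduction of a poset. Here that poset is $R(G)$, regarded as a graph, and transitive reductions of DAGs are unique \cite{aho1972transitive}, so $G = H$.

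Now fix $G \in \overline{C}_0$ and follow the cycle. By the definition of the arrow $\overline{C}_0 \to \overline{C}_1$ in line \ref{alg:HC-v2:line-4}, there is $g_1 \in \overline{C}_1$ with a morphism $G \to g_1$. Repeating along the cycle gives $g_i \in \overline{C}_i$ and morphisms
\[
G \to g_1 \to g_2 \to \cdots \to g_k, \qquad g_k \in \overline{C}_0 .
\]
Composing yields a morphism $G \to g_k$ between two elements of $\overline{C}_0$. Condition (ii) of line \ref{alg:HC-v2:line-3} forbids this unless $g_k = G$. So we have a chain of morphisms from $G$ back to $G$ through $g_1, \ldots, g_{k-1}$. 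For each $i$, composing gives morphisms $G \to g_i$ and $g_i \to G$, so antisymmetry gives $g_i = G$. Hence $G \in \overline{C}_i$ for every $i$, that is, $\overline{C}_0 \subseteq \overline{C}_i$.

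Finally, $r = |\overline{C}_0| \leq |\overline{C}_1| \leq r$, so $\overline{C}_1 = \overline{C}_0$. This contradicts the fact that consecutive vertices on a cycle of length $k \geq 2$ are distinct. The step needing the most care is antisymmetry in $\mathcal{R}(J)$, which rests on uniqueness of transitive reduction. The rest is bookkeeping.
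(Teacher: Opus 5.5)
Your proposal is correct and is essentially the paper's own argument. You chase each $G \in \overline{C}_0$ around the cycle and use condition (ii) to force the returning graph to equal $G$. You then use uniqueness of transitive reductions to conclude that every intermediate graph equals $G$, and finish with $\overline{C}_0 = \overline{C}_1$ by cardinality. The only cosmetic difference is that you package the uniqueness step as antisymmetry of morphisms in $\mathcal{R}(J)$, whereas the paper writes out the chain of inclusions $R(G) \subseteq R(G_{m-1}) \subseteq \cdots \subseteq R(G)$ directly.
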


\begin{proof}
Suppose $\mathcal{G}$ contains a cycle that passes through some $\overline{C}_0$ with $|\overline{C}_0|=r$. Since arrows in $\mathcal{G}$ are defined only between distinct vertices, there must be a sequence of vertices in $\mathcal{G}$
\[
\overline{C}_0, \overline{C}_1, \cdots, \overline{C}_m
\]
with $m\geq 2$, $\overline{C}_0 \neq \overline{C}_1$ and $\overline{C}_0 = \overline{C}_m$, such that for every $0 \leq i \leq m-1$ and every $G \in \overline{C}_i$, there exists  $G' \in \overline{C}_{i+1}$ such that there is a morphism $G \to G'$ in $\mathcal{R}(J)$.  

As a result, for every $G \in \overline{C}_0$ there is a sequence of morphisms in $\mathcal{R}(J)$
\begin{equation}\label{eq:GisDAG-01}
G = G_0 \to G_1 \to \cdots \to G_m
\end{equation}
where $G_i \in \overline{C}_i$ for all $0 \leq i \leq m$.  In particular, there is a morphism from $G$ to $G_m$ in $\mathcal{R}(J)$ while they both are elements of $\overline{C}_0=\overline{C}_m$.  By condition (ii) in line \ref{alg:HC-v2:line-3} of Algorithm \ref{algo:HC-v2}, $G_m$ must be equal to $G$, giving us
\begin{equation}\label{eq:GisDAG-02}
R(G) = R(G_m) \subseteq R(G_{m-1}) \subseteq \cdots  \subseteq R(G_0) = R(G);
\end{equation}
since $R(G)$ is a finite set, equality must hold throughout.  Since a poset has a unique transitive reduction, it follows that all the $G_i$ are equal and all the morphisms in \eqref{eq:GisDAG-01} are identity morphisms.  Since this holds for every $G \in \overline{C}_0$, it follows that  $\overline{C}_0$ is a subset of $\overline{C}_1$ and so $r=|\overline{C}_0| \leq |\overline{C}_1|$.  Since $\overline{C}_1$ is a vertex in $\mathcal{G}$, however, we have $|\overline{C}_1| \leq r$  which forces $|\overline{C}_1|=r$ and in turn $\overline{C}_0 = \overline{C}_1$, which is a contradiction.
\end{proof}

\section{A case study: jumps in figure skating}\label{sec:intro-FSjumps}

In figure skating, there are six types of jumps: axel, toe loop, loop, salchow, flip, and lutz.  Different jumps are distinguished by four pairs of technical features that were mentioned in Section \ref{para:intro-testingth}:
\begin{itemize}
    \item[(i)] forward/backward direction; 
    \item[(ii)] inside/outside edge;  
    \item[(iii)] toe/edge jumps; 
    \item[(iv)] same/different landing foot. 
\end{itemize}

We  summarize the presence or absence of these features in each jump in Table \ref{tab:jumps}.

\begin{center}
\footnotesize
\setlength{\tabcolsep}{3pt}
\begin{tabularx}{\columnwidth}{|l|>{\centering\arraybackslash}X
                                 |>{\centering\arraybackslash}X
                                 |>{\centering\arraybackslash}X
                                 |>{\centering\arraybackslash}X|}
\hline
\textbf{jump} & \textbf{\mbox{forward/}\newline \mbox{backward}}
              & \textbf{inside/\newline outside}
              & \textbf{toe/ \newline edge}
              & \textbf{landing\newline foot} \\
\hline
toe loop & backward & outside & toe & same \\ \hline
flip     & backward & inside & toe & different \\ \hline
lutz     & backward & outside & toe & different \\ \hline
salchow  & backward & inside & edge  & different \\ \hline
loop     & backward & outside & edge  & same \\ \hline
axel     & forward & outside & edge  & different \\ \hline
\end{tabularx}
\captionof{table}{Features of the six jumps in figure skating.}
\label{tab:jumps}
\end{center}

\paragraph \label{para:event_ordering} Since forward/backward direction and inside/outside edge occur before a toe pick (if there is one) in an ``ideal'' jump, the configuration of features for each of the six jumps can be represented as a wiring diagram.  Using the notation in Table \ref{tab:events}, the WD graph for each of the six jumps is listed in \eqref{eq:jumpWDs} below.

\begin{center}
\begin{tabular}{ll}
\hline
Feature & Symbol \\
 \hline
forward direction      & $e_1$ \\
backward direction     & $e_2$ \\
outside edge  & $e_3$ \\
inside edge      & $e_4$ \\
toe pick (toe jump)     & $e_5$ \\
no toe pick (edge jump)     & $e_6$ \\
same landing foot         & $e_7$ \\
different landing foot         & $e_8$ \\
\hline
\end{tabular}
\captionof{table}{Symbols for features in a figure skating jump.}
\label{tab:events}
\end{center}

\begin{equation}\label{eq:jumpWDs}
\xymatrix@R=0.8pc@C=1.6pc{
  \text{toe loop} & e_2 \ar[r] & e_5 \ar[r] & e_7 \\
   & e_3 \ar[ur] & & \\
   \text{flip} & e_2 \ar[r] & e_5 \ar[r] & e_8 \\
    & e_4 \ar[ur] & & \\
    \text{lutz} & e_2 \ar[r] & e_5 \ar[r] & e_8 \\
    & e_3 \ar[ur] & & \\
    \text{salchow} & e_2 \ar[r] & e_6 \ar[r] & e_8 \\
    & e_4 \ar[ur] & & \\
    \text{loop} & e_2 \ar[r] & e_6 \ar[r] & e_7 \\
    & e_3 \ar[ur] & & \\
    \text{axel} & e_1 \ar[r] & e_6 \ar[r] & e_8 \\
    & e_3 \ar[ur] & & \\
}
\end{equation}

In general, wiring diagrams can be used to detect the potential presence of a concept within data, or to classify the different concepts within data via Hasse clustering.  Below, we describe what detection and classification mean in the context of figure skating data.

\paragraph[Detection] Armed with the wiring diagram for each jump  as in \eqref{eq:jumpWDs}, one can  take any video containing a figure skating jump, extract a sequence of features, and compare the sequence with the wiring diagrams in \eqref{eq:jumpWDs} to detect the type of jump in the video.  This gives an example of using wiring diagrams to \emph{detect} behaviors.  In the language of \cite{LJ-D2WD}, the jump in a video can be identified by checking which wiring diagram in \eqref{eq:jumpWDs} has the extracted sequence of events as a `flattening' \cite[Definition 5.19]{LJ-D2WD}.

\paragraph[Classification] \label{para:WDinFS-classification} In theory, given a collection of videos of figure skaters  performing jumps of all six  types, if  specialist algorithms are available to determine whether each feature in Table \ref{tab:jumps} occurs, then one can produce a temporal sequence of such features from each video and feed all these sequences into  Algorithm \ref{algo:HC-v2}.  The algorithm is then expected to sort the sequences into six clusters, each corresponding to a 
wiring diagram in \eqref{eq:jumpWDs}.  
We test this idea in the next section.

\section{Extracting sequences from   figure skating videos}\label{sec:HConFS}

In order to use wiring diagrams for either classifying (sorting into clusters) or detecting jumps in figure skating videos, we first need to convert these videos into sequences of features.

In this work, we use   the   FS-Jump3D dataset created in \cite{10.1145/3689061.3689077}.  This dataset contains videos and corresponding 3D motion-capture data (in JSON format) from 12 viewpoints, for four skaters performing axel, flip, loop, lutz, salchow, and toe loop jumps, with  ten videos for each skater and each jump.  In total, there are 240 videos.    These videos can be converted to sequences in the finite set $X = \{ e_1, \cdots, e_8\}$ via a  process outlined below.

\paragraph[Low-level cue extractors]\label{para:lowlevelcueextrac} For each of the four features associated to each jump (features (i) through (iv) in Section \ref{para:intro-testingth}), we wrote a  low-level script to extract a broad set of numerical cues from the 3D motion-capture data.  For each jump feature, the name of the corresponding script and examples of  cues considered are listed in Table \ref{tab:jumpfeaturesdesc}.  These low-level scripts do not consider how these cues might be correlated - they only aim to extract a broad set of parameters for later processing.

\end{multicols}

\begin{table}[h]
\centering
\small

\begin{tabular}{|c|p{4.5cm}|p{6cm}|}
\hline
\textbf{Jump feature} & \textbf{Script name} & \textbf{Examples of 3D motion cues computed in script}  \\ \hline\hline
forward/backward  direction &  \scriptname{detect_forward_backward_from_c3d.py}  & Direction of the takeoff skate relative to travel; whether the heel or ankle leads;  torso and shoulder orientation relative to the skater's travel direction.   \\ \hline 
inside/outside edge & \scriptname{detect_edge_type_from_c3d.py} & Foot, ankle, knee, and hip geometry near takeoff;   the curvature, lean, and lateral position of the skater's approach path, normalized for body and limb scale.  \\ \hline
toe/edge jump & \scriptname{detect_toe_pick_from_c3d.py} & Whether a toe contacts the ice before takeoff; toe/heel motion and release timing. \\ \hline
same/different landing foot & \scriptname{detect_same_foot_from_c3d.py} & Which skate carries weight just before takeoff and which skate first becomes the  weight-bearing skate after landing. 
\\ \hline
\end{tabular}
\caption{Jump features and their corresponding low-level scripts.}
\label{tab:jumpfeaturesdesc}
\end{table}

\begin{table}[h]
\centering
\small
\begin{tabular}{|c|p{4.5cm}|p{3.2cm}|c|c|}
\hline
\textbf{Feature}
& \textbf{Model}
& \textbf{Profile}
& \textbf{20\% Test}
& \textbf{LOSO}
\\ \hline\hline

Forward/backward
& Logistic regression
& \texttt{takeoff\_core}
& $97.92\%$
& $99.58\%$
\\ \hline

Inside/outside
& L1 logistic regression
& \texttt{normalized\_full}
& $100.00\%$
& $100.00\%$
\\ \hline

Toe/edge
& Histogram gradient boosting
& \texttt{full}
& $95.83\%$
& $89.17\%$
\\ \hline

Same/different landing foot
& Histogram gradient boosting
& \texttt{aux\_fused}
& $91.67\%$
& $81.25\%$
\\ \hline
\end{tabular}
\caption{Model accuracies for 80/20 train-test split and LOSO.}
\label{tab:model-performance}
\end{table}

\begin{multicols}{2}

\paragraph[Classifiers]  For each of the four jump features, we then trained a classifier that detects that feature from a jump video and its 3D motion-capture data.  For example, the toe/edge classifier determines whether a toe pick occurs  (as in a toe jump) or does not occur (as in an edge jump) during the jump.  Different sets of cues from the low-level scripts in \ref{para:lowlevelcueextrac} were tested during training, to see which sets perform better in transfer across different skaters.  We refer to these sets of cues as  \emph{profiles}.  

The profile for a classifier might involve cues from a low-level script for a different jump feature.  For example,  the profile for inside/outside edge  consists of the full sets of cues from its corresponding low-level script, whereas the profile for same/different landing foot consists of cues from its corresponding low-level script as well as that for toe/edge.

In training the classifiers for forward/backward direction, toe/edge, and same/different landing foot, only 80\% of the 240 videos were used; the other 20\% were reserved for testing.  Once the optimal profiles for these three jump features were selected in building the classifiers, the classifiers were also retrained using video data from only three of the skaters; the classifiers were then tested on the videos of the fourth skater - we call this the \emph{leave-one-skater-out (LOSO)} scheme.  The LOSO scheme aims to show how robust our models are in transfer across different skaters.   For inside/outside edge, the same training schemes were run, but using \emph{only flip and lutz videos} - the reasons are explained below.   Results from accuracy tests for these classifiers are recorded in Table \ref{tab:model-performance}.  Figure \ref{fig:confusion} shows the confusion matrices for these classifiers.

A reason for using only flip and lutz videos for training the inside/outside edge classifier was that this jump feature is needed only when one tries to distinguish between flip and lutz jumps.  If we merge flip and lutz jumps into a single family, then all  five  jumps (toe loop, flip/lutz, salchow, loop, axel) can be distinguished from one another using just three features:  forward/backward, toe/edge, and same/different landing foot.

\paragraph[Motivation for iterative Hasse clustering] \label{para:motivation-iterHC}
The two main reasons for restricting the training data for the inside/outside edge classifier, however, are the following: (1) If we  run all four classifiers on all 240 videos,  we would obtain 240 sequences in the set $X = \{ e_1, \cdots, e_8\}$.  Since we expect these 240 sequences to be sorted into 6 clusters under Algorithm \ref{algo:HC-v2} according to the 6 jump types, we would choose $r=6$ and $J=X=\{e_1, \cdots, e_8\}$ as input for Algorithm \ref{algo:HC-v2}.  The  complexity of this algorithm, however, scales combinatorially as $r$ increases, making $r=6$ unfeasible to run on a personal computer.  (2) Distinguishing between inside and outside edge can be tricky, and this is a known difficulty in figure skating \cite{tanaka2023automatic}. We tried training the inside/outside edge classifier using 80\% of the videos for all six types of jumps, but the accuracy  was only 75\% in the 80/20 split test and 71.25\% for LOSO.  As a result, we could not reliably use the inside/outside edge classifier on all  240 videos in sorting them into clusters.

In order to keep $r \leq 5$ for the algorithm to run on our hardware, we ran clustering using  Algorithm \ref{algo:HC-v2} in two iterations: 
\begin{itemize}
    \item In the first iteration, we used only the forward/backward, toe/edge, and same/different landing foot classifiers, for which we expect five clusters based on the ideal WD graphs in \eqref{eq:jumpWDs}.  Flip and lutz jumps were expected to form one cluster since they are indistinguishable using only the three classifiers above.
    \item In the second iteration, we focused on the cluster containing flip and lutz jumps, then used the inside/outside edge classifier to sort the members into clusters which we expected to correspond to flip and lutz jumps.
\end{itemize}
This process of `iterative clustering'  motivated by the combinatorial complexity of  Algorithm \ref{algo:HC-v2} led to the question of how wiring diagrams produced in different stages can be pieced together to form a more comprehensive picture.  We develop a theory in Section \ref{sec:theory-gluingWDs} to give an answer to this question.

\paragraph[Constructing sequences of events] \label{para:eventseq_construction} Once the four classifiers were trained, they were applied to each of the 240 videos.  (The inside/outside edge classifier only has a 67.5\% accuracy when tested on all 240 videos, but this is not surprising since it was trained using only videos for flip and lutz jumps.)  Each video thus produced a collection of four events $e_i$, one from each of the pairs $\{ e_1, e_2\}, \{e_3, e_4\}, \{e_5, e_6\}, \{e_7, e_8\}$.  The four  events were then ordered using \emph{staged ordering}, whereby the order of  $e_1, e_2, e_3, e_4$ (forward/backward direction and inside/outside edge) is determined by the natural order in which they occur in the particular video, followed by $e_5$ or $e_6$ (toe/edge), and then $e_7$ or $e_8$ (same/different landing foot).  This choice of ordering is informed by our theory in \ref{para:event_ordering}.  The natural order among $e_1, e_2, e_3, e_4$ was determined using the video frames containing numerical cues feeding into the low-level scripts in Section \ref{para:lowlevelcueextrac}.  Overall, we obtained 240 sequences $s^{(i)}$ (where $1 \leq i \leq 240$) in the set $X=\{e_1, \cdots, e_8\}$.

\paragraph[Results from iterative Hasse clustering] \label{para:HCactualresults} 
We ran  Algorithm \ref{algo:HC-v2} (Hasse clustering)  in two iterations as outlined in \ref{para:motivation-iterHC}.  The results are described below.

\subparagraph[The first iteration] In the first run of Algorithm \ref{algo:HC-v2}, we used  the following input parameters: $X$ and $s^{(i)}$ for $1 \leq i \leq 240$ as in \ref{para:eventseq_construction}, $J=\{e_1, e_2, e_5, e_6, e_7, e_8\}, r=5$, and $t=95$.   Note that this   omits  the use of the inside/outside edge classifier.  

Algorithm \ref{algo:HC-v2}  returned an output $\mathcal{C}$ with only one element $\overline{C}$, which in turn has five elements.  These five elements (which are  quasi-skeleton WDs) are listed in Table \ref{tab:WDfrom1stHC} along with: 
\begin{itemize}
    \item size: the number of sequences $s^{(i)}$ that were sorted into that cluster, i.e.\ the number of indices $i$ such that $S_i$ contains that particular graph;
    \item \% total: `size' divided by 240, the total number of sequences, as a percentage;
    \item jump:  the jump whose `ideal' wiring diagram in \eqref{eq:jumpWDs} restricts to this particular quasi-skeleton WD graph with respect to the subset $J$ (see Definition \ref{def:graphrestrictiontosubsest} for the restriction of a graph);
    \item purity: the percentage of sequences in  this cluster for which the jump type is correctly identified.
\end{itemize}

\end{multicols}

\begin{center}
\begin{tabular}{ccccc}
\hline
graph & size & \% total & jump & purity \\
 \hline
$\xymatrix{e_1 \ar[r] & e_6 \ar[r] & e_8}$      & 40 & 16.7\% & axel & $40/40=100\%$ \\
$\xymatrix{e_2 \ar[r] & e_5 \ar[r] & e_7}$ & 41 & 17.2\% & toe loop & $39/41=95.1\%$   \\
$\xymatrix{e_2 \ar[r] & e_5 \ar[r] & e_8}$ & 76 & 31.7\% & flip/lutz & (see Table \ref{tab:WDfrom1stHC-2}) \\
$\xymatrix{e_2 \ar[r] & e_6\ar[r] & e_7}$ & 40 & 16.7\% & loop & $39/40=97.5\%$ \\
$\xymatrix{e_2 \ar[r] & e_6 \ar[r] & e_8}$ & 42 & 17.5\% & salchow & $40/42=95.2\%$\\
\hline
\end{tabular}
\captionof{table}{WD graphs produced by Hasse clustering using jump features other than inside/outside edge.}
\label{tab:WDfrom1stHC}
\end{center}

\begin{center}
\begin{tabular}{ccccc}
\hline
graph & size & \% total & jump & purity \\
 \hline
$\xymatrix@R=0.5em{e_2 \ar[r] & e_5 \\ e_3 \ar[ur] & }$      & 38 & 50.0\% & lutz & $38/38=100\%$ \\
$\xymatrix@R=0.5em{e_2 \ar[r] & e_5 \\ e_4 \ar[ur] &}$ & 38 & 50.0\% & flip & $38/38=100\%$\\
\hline
\end{tabular}
\captionof{table}{WD graphs produced by Hasse clustering using jump features other than same/different landing foot.}
\label{tab:WDfrom1stHC-2}
\end{center}

\begin{multicols}{2}

The cluster sizes in Table \ref{tab:WDfrom1stHC} sum to fewer than 240 because we chose a value of $t$ strictly less than 100.  This choice allowed for errors in the classifiers (which did not achieve perfect accuracy in the tests reported in Table \ref{tab:model-performance}) and irregularities in the input data for Algorithm \ref{algo:HC-v2} (e.g., if a skater performed a jump in an ``atypical'' fashion).  Consequently, sequences that were inconsistent with the common themes of the majority could remain unassigned to clusters.  Using $t=100$ would have required all 240 sequences to belong to a cluster, potentially causing the resulting wiring diagrams to be too general to convey useful information.

\subparagraph[The second iteration]  In the second run of Algorithm \ref{algo:HC-v2}, we used the same $X$ as in the first run, but with $J=\{ e_1, e_2, e_3, e_4, e_5, e_6\}$, $r=2$, and $t=100$.  The algorithm returned an output $\mathcal{C}$ with one element $\overline{C}$, which in turn has two elements.   The analogue of Table \ref{tab:WDfrom1stHC} for these two clusters is shown in Table \ref{tab:WDfrom1stHC-2}.  In this iteration, we chose $t=100$ because our inside/outside edge classifier had a 100\% accuracy on flip and lutz jumps (see Table \ref{tab:model-performance}).

\subparagraph[Discussion of results]  In the first iteration of Hasse clustering, 239 of the 240 videos in the FS-Jump3D dataset were sorted into 5 clusters.  (One of the 240 videos was not assigned to any cluster.)  We expected four of these clusters to correspond to   axel, toe loop, loop, and salchow, with the remaining  cluster corresponding to a mix of flip and lutz jumps.  Indeed, our algorithm returned the correct WD graphs for these clusters:  the graphs in  Table \ref{tab:WDfrom1stHC} are exactly the restrictions of the respective `ideal' versions in  \ref{eq:jumpWDs} with respect to the subset $\{e_1, e_2, e_5, e_6, e_7, e_8\}$.

In the second iteration of Hasse clustering, the flip/lutz cluster from the first iteration was further divided into two subclusters.  The expectation was that these two subclusters would correspond to flip and lutz, respectively.  Indeed, the graphs in Table \ref{tab:WDfrom1stHC-2} are the restrictions of the `ideal' versions  in  \ref{eq:jumpWDs} with respect to the subset $\{e_1, e_2, e_3, e_4,  e_5, e_6\}$.

Figure \ref{fig:pipeline} summarizes the pipeline and results of our method  in the case of figure skating data.

\begin{figure*}[p]
    \centering

    \includegraphics[
        width=\textwidth,
        height=0.39\textheight,
        keepaspectratio]{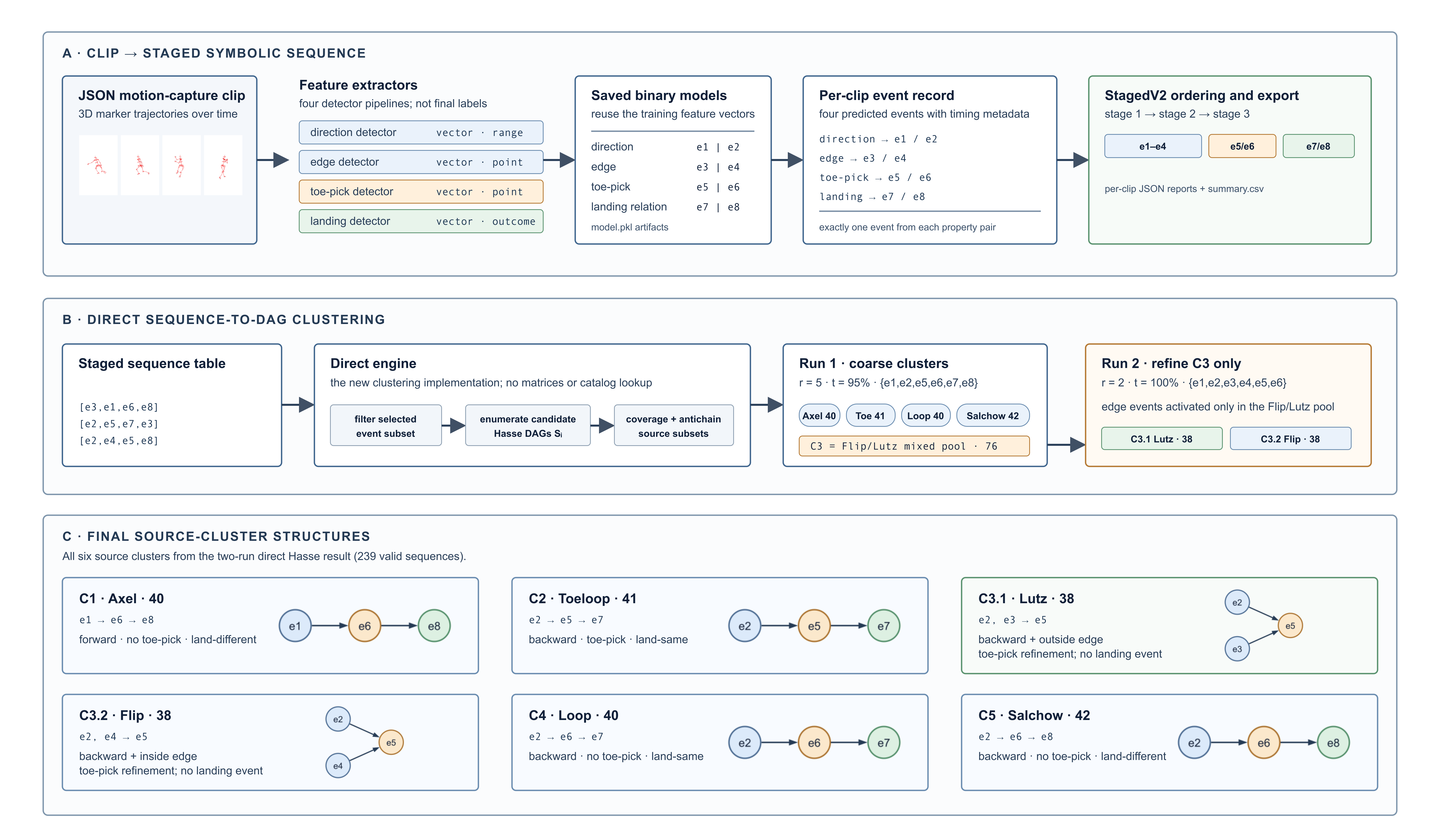}

    \caption{The complete pipeline of our method in three steps.  Part A illustrates the steps that convert videos to sequences.  Part B shows the process of Hasse clustering, while Part C shows the wiring diagram graphs that result from the two stages of clustering.}
    \label{fig:pipeline}

    \vspace{1.5em}

    \includegraphics[
        width=\textwidth,
        height=0.45\textheight,
        keepaspectratio
    ]{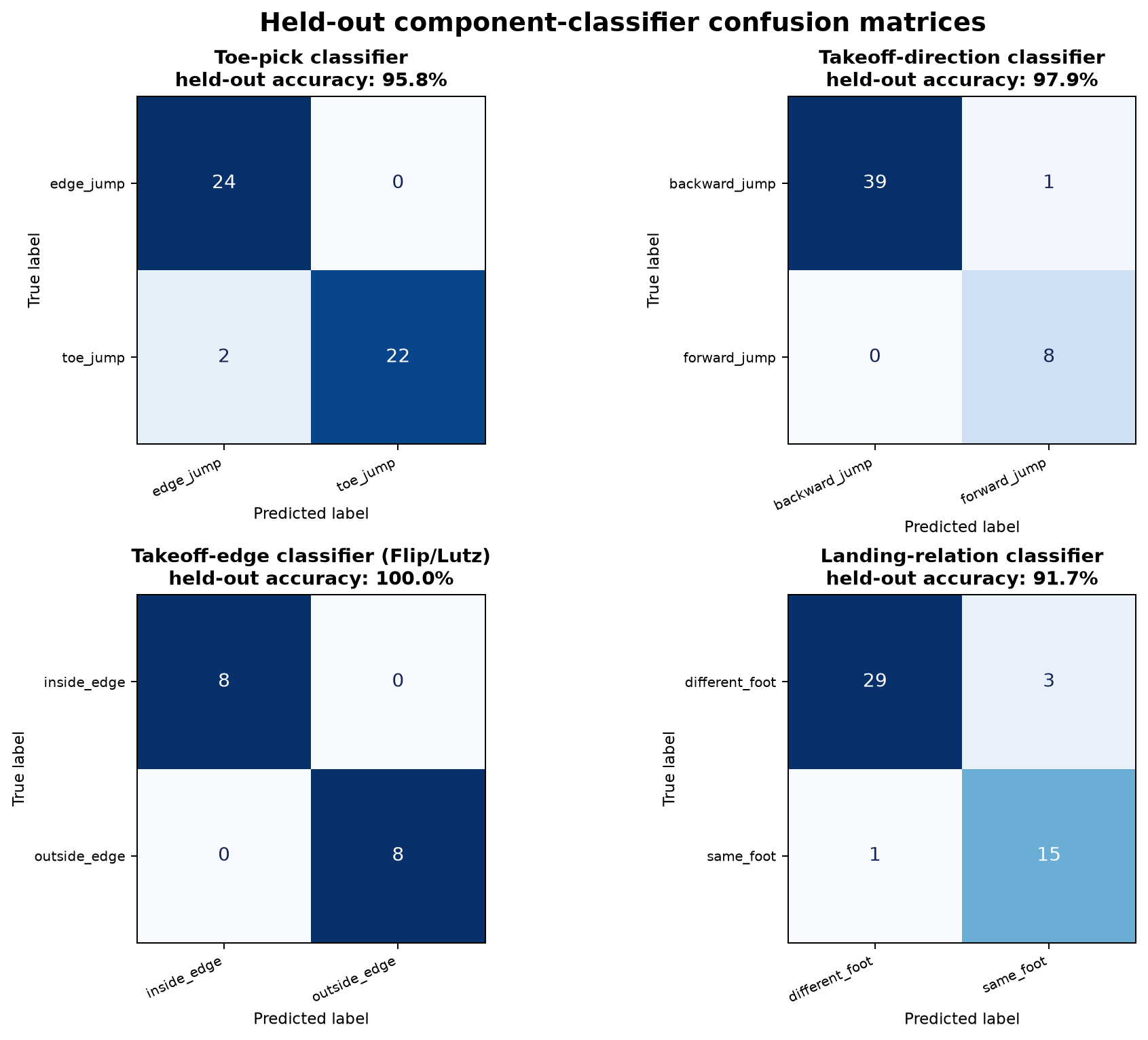}

    \caption{Confusion matrices for the four classifiers.}
    \label{fig:confusion}
\end{figure*}

In Section \ref{sec:theory-gluingWDs}, we outline a theory of how WD graphs from multiple iterations of Hasse clustering can be combined to give larger graphs that represent the subclusters.  We also give a theoretical example in Example \ref{eg:main-01} that illustrates the theory for data under ``perfect  conditions'', and compare it with results from our experiment in this section.

\section{A theory of gluing wiring diagrams}\label{sec:theory-gluingWDs}

In this paper, we analyzed data from figure skating by performing two iterations of Hasse clustering.   In the first round, we refrained from using the inside/outside edge classifier; in the second round, we applied the inside/outside edge classifier to data on which it was specifically trained and for which it achieved  an accuracy rate of 100\% (see Table \ref{tab:model-performance}).  This motivates the following theoretical question:

\begin{center}
\fbox{\parbox{0.9\linewidth}{\centering
\emph{If we perform Hasse clustering multiple times using different subsets of events, how can we relate the results to those obtained  from performing Hasse clustering once using the full set of events?}
}}
\end{center}

In the rest of this section, we present some theory that answers this question in a special case that covers the figure skating data we studied in Section \ref{sec:HConFS}.

We begin with a lemma that says that when we  set $r=1$ in Algorithm \ref{algo:HC-v2} with $t=100$, the algorithm returns only one possible graph.  Intuitively, this says that if we want to summarize all the sequences in a single graph, there is only one way to do so.

\begin{lem}\label{lem:commonWDG}
Let $X$ be a finite set, and $s^{(1)}, \cdots, s^{(p)}$ simple sequences in $X$, and  $J$  a subset of $X$. Suppose  applying Algorithm \ref{algo:HC-v2} with $r=1$ and $t=100$  to the sequences $s^{(1)}, \cdots, s^{(p)}$ produces the output $\mathcal{C} = \{ \{ G_1\}, \cdots, \{G_n\}\}$ where each $G_i$ lies in $\mathcal{R}(J)$.  Then %whenever $\mathcal{C}$ is nonempty, we have 
$n=1$.
\end{lem}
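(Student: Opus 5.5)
The plan is to show that the single-element output is forced by the existence of a \emph{largest} graph consistent with all the sequences. With $r=1$ and $t=100$, the sets $\overline{C}$ found in line \ref{alg:HC-v2:line-3} of Algorithm \ref{algo:HC-v2} are exactly the singletons $\{G\}$ with $G \in \bigcap_{i=1}^p S_i$. Condition (ii) is vacuous for singletons, and condition (iii) with $t=100$ says every $S_i$ contains $G$. In the graph $\mathcal{G}$ of line \ref{alg:HC-v2:line-4}, there is an arrow $\{G'\}\to\{G\}$ for distinct vertices precisely when $R(G)\subseteq R(G')$. So the output consists of those $G \in \bigcap_i S_i$ for which no other $G' \in \bigcap_i S_i$ satisfies $R(G) \subseteq R(G')$.

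First I will construct the candidate largest element. Define
\[
P = \{(x,x) : x\in J\} \cup \{(x,y)\in J\times J : x\neq y,\ \text{$x,y$ appear in every } s^{(i)} \text{ with } x \text{ before } y\}.
\]
I will check that $P$ is a poset. Reflexivity holds by definition. Antisymmetry holds because $x$ cannot precede $y$ and $y$ precede $x$ in the same simple sequence; this uses $p\ge 1$, which I assume. Transitivity is inherited from the linear order of terms in each $s^{(i)}$. Let $H$ be the transitive reduction of (the graph associated to) $P$. By \cite[Theorem 3.7]{LJ-D2WD}, $H$ is a quasi-skeleton WD graph with vertex set $J$, and $R(H)=P$. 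For $a\neq b$, a path $e_a\to e_b$ in $H$ means $(e_a,e_b)\in P$, so every $s^{(i)}$ is consistent with $H$. Hence $H\in\bigcap_i S_i$. In particular $\mathcal{C}$ is nonempty, since $\{H\}$ is a vertex and the argument below shows it is a source.

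Next I will prove maximality. If $G\in\bigcap_i S_i$ and $(x,y)\in R(G)$ with $x\ne y$, then there is a path $x\to y$ in $G$. Consistency with each $s^{(i)}$ puts $(x,y)$ in $P$. So $R(G)\subseteq P=R(H)$, which means there is a morphism $H\to G$ in $\mathcal{R}(J)$.

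Finally I will conclude. Suppose $\{G\}$ is in the output and $G\neq H$. Then $\{H\}\to\{G\}$ is an arrow of $\mathcal{G}$ into $\{G\}$, which contradicts the choice of $\{G\}$ in line \ref{alg:HC-v2:line-5}. Hence every output element equals $\{H\}$. Moreover $\{H\}$ itself is a source: an arrow $\{G'\}\to\{H\}$ would force $P=R(H)\subseteq R(G')\subseteq P$. Then $R(G')=R(H)$, and uniqueness of transitive reductions gives $G'=H$, contradicting that arrows join distinct vertices. Thus $n=1$.

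I expect the main obstacle to be verifying that $P$ is a genuine poset and that its transitive reduction lands in every $S_i$. This requires matching paths in $H$ with pairs in $P$, which I would handle via \cite[Theorem 3.7]{LJ-D2WD} and the definition of $R(-)$.
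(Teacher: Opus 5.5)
Your proof is correct and follows the paper's argument. You build the greatest graph consistent with all the sequences (your $H$, the paper's $G^\ast$), show $R(G)\subseteq R(H)$ for every admissible $G$, and use the no-incoming-arrow condition to force every output element to be $\{H\}$; your extra checks that $P$ is a poset and that $\{H\}$ is itself a source (so $\mathcal{C}\neq\varnothing$, which the paper leaves implicit and which is needed to get $n=1$ rather than $n\le 1$) fill in details the paper's proof glosses over.
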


\begin{proof}

Let $G^\ast$ denote the transitive reduction of the graph with vertex set $J$ such that  there is a path from vertex $a$ to vertex $b$ in $G^\ast$ if and only if, for all $i$, the terms $a, b$ both appear in $s^{(i)}$ and $a$ appears before $b$.  By construction, $\{ G^\ast \}$ must be one of the vertices in the graph $\mathcal{G}$ from line \ref{alg:HC-v2:line-4} of Algorithm \ref{algo:HC-v2}.

Since $t=100$, for any  $1 \leq j \leq n$  there is a path from vertex $x$ to vertex $y$ in $G_j$  only if the following is true: for all $i$, both $x, y$ appear in the sequence $s^{(i)}$ and $x$ appears before $y$.  This means $R(G_j) \subseteq R(G^\ast)$ for all $j$, i.e.\ there is a morphism from $G^\ast$ to $G_j$ in the category $\mathcal{R}(J)$ for all $j$.  From line \ref{alg:HC-v2:line-5} of   Algorithm \ref{algo:HC-v2}, however, there cannot be any arrow pointing to $\{ G_j \}$ in the graph $\mathcal{G}$ for any $j$.  Since $\{ G^\ast \}$ is a vertex in $\mathcal{G}$ from the previous paragraph, this means  all the $G_j$ must be equal to $G^\ast$, and so $n=1$ as claimed.
\end{proof}

\begin{defn}[common WD graph]\label{def:commonWDG}
Under the hypotheses of Lemma \ref{lem:commonWDG}, Algorithm  \ref{algo:HC-v2} produces the output $\mathcal{C} = \{ \{ G^\ast \}\}$.  We will refer to the graph  $G^\ast$  as the \emph{common wiring diagram graph} of the sequences $s^{(1)}, \cdots, s^{(p)}$ with respect to $J$. We take $J$ to be $X$ if $J$ is not specified.
\end{defn}

\paragraph \label{para:pathglueingcond} Let  $G$ be a quasi-skeleton WD graph with vertex set $X$.  Suppose $V_1, \cdots, V_n$ are subsets of $X$ satisfying the gluing condition
\begin{itemize}
\item[(U)] every path $p$ in $G$ can be written as a concatenation
\[
p = p_1 . \cdots . p_k
\]
where each $p_i$ is a path in some $G_{V_{i'}}$ for some $1 \leq i' \leq n$.
\end{itemize}

\begin{prop}\label{prop:WDgluing-01}
Let $G$ be a quasi-skeleton WD graph  with vertex set $X$.  If  $V_1, \cdots, V_n$ are subsets of $X$ satisfying  condition (U), then $G$ is the transitive reduction of 
\[
  \bigcup_{i=1}^{n} R(G_{V_i}).
\]
\end{prop}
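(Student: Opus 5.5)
The plan is to prove the equality of two relations on $X$: writing $T$ for the transitive closure of $\bigcup_{i=1}^n R(G_{V_i})$, we show that $T = R(G)$. The diagonal pairs $(x,x)$ lie in both, so only off-diagonal pairs need attention. Once this equality is established, the claim follows from two facts used earlier in the paper. The transitive reduction of a relation coincides with that of its transitive closure. A quasi-skeleton WD graph is the unique transitive reduction of its reachability poset $R(G)$ (by \cite[Theorem 3.7]{LJ-D2WD}, as in the proof of the lemma on cycles in $\mathcal{G}$). Together these give that the transitive reduction of $\bigcup_i R(G_{V_i})$ is $G$.

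\textbf{The inclusion $T \subseteq R(G)$.} By Definition~\ref{def:graphrestrictiontosubsest}, $G_{V_i}$ is the transitive reduction of the poset $R(G)\cap(V_i\times V_i)$. Because that set is already a poset, hence reflexive and transitive, the reachability relation of its transitive reduction is the set itself. This gives
\[
R(G_{V_i}) = R(G)\cap (V_i\times V_i) \subseteq R(G).
\]
This is the step needing the most care: it relies on the fact that the transitive reduction of a finite poset has exactly that poset as its reflexive-transitive closure, which I would cite from \cite{aho1972transitive}. Taking the union over $i$ gives $\bigcup_i R(G_{V_i}) \subseteq R(G)$. Since $R(G)$ is transitive, the transitive closure satisfies $T \subseteq R(G)$.

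\textbf{The inclusion $R(G) \subseteq T$.} Let $(x,y)\in R(G)$ with $x\neq y$, so there is a path $p$ from $x$ to $y$ in $G$. Condition (U) writes $p = p_1.\cdots.p_k$, where each $p_j$ is a path in some $G_{V_{i_j}}$, running from a vertex $x_{j-1}$ to a vertex $x_j$, with $x_0=x$ and $x_k=y$. Each pair $(x_{j-1},x_j)$ then lies in $R(G_{V_{i_j}})$. (A degenerate length-zero piece contributes a diagonal pair, which is harmless.) Chaining these pairs by transitivity puts $(x,y)$ in $T$.

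Hence $T = R(G)$, and $G$, being the transitive reduction of $R(G)$, is the transitive reduction of $\bigcup_i R(G_{V_i})$. Beyond the fact flagged above, the only point to check is a convention. The paper phrases condition (U) in terms of the paths $p_i$ lying in $G_{V_{i'}}$, and we only use their endpoints, so no compatibility between the arrows of $G$ and the arrows of $G_{V_{i'}}$ is needed.
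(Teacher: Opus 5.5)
Your proof is correct and follows the paper's argument. Both reduce to showing that the transitive closure of $\bigcup_i R(G_{V_i})$ equals $R(G)$, and both get $R(G)\subseteq T$ by chaining the endpoints of the pieces supplied by condition (U); your proof of the other inclusion, via $R(G_{V_i}) = R(G)\cap(V_i\times V_i)$ and transitivity of $R(G)$, is slightly more careful than the paper's, which treats a concatenation of paths in the $G_{V_{i'}}$ as a path in $G$ even though arrows of $G_{V_{i'}}$ need not be arrows of $G$.
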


Here, by a transitive reduction of a subset of $V_i \times V_i$, we mean the transitive reduction of the DAG associated to that subset of $V_i \times V_i$ as defined in \ref{para:SeeqGrOps}.

\begin{proof}
Note that there is a 1-1 correspondence between transitively closed DAGs and their transitive reductions \cite[Section 1]{aho1972transitive}.  Since  $G$ is a quasi-skeleton WD graph,  and hence the transitive reduction of a DAG by \cite[Theorem 3.7]{LJ-D2WD}, it suffices to show
\begin{equation}\label{eq:glue-01}
R(G) = \text{ transitive closure of }\bigcup_{i=1}^{n} R(G_{V_i}).
\end{equation}

First, we prove the inclusion $\supseteq$ in \eqref{eq:glue-01}.  Suppose $(x,y)$ lies in the right-hand side of \eqref{eq:glue-01} and $x \neq y$.  Then there is a sequence of ordered pairs in $X \times X$
\[
(a_0,a_1), (a_1, a_2), \cdots,  (a_{m-1}, a_m)
\]
with $m \geq 1, a_0=x, a_m=y$ such that each $(a_{i-1}, a_i)$ lies in $R(G_{V_{i'}})$ for some $1 \leq i' \leq n$.  Since $(a_{i-1}, a_i)$ lying in $R(G_{V_{i'}})$ means there is a path $p_{i'}$ in $G_{V_{i'}}$ from vertex $a_{i-1}$ to vertex $a_i$, the concatenation $p_{1'}. \cdots .p_{m'}$ gives a path from $x$ to $y$ in the graph $G$, i.e.\ $(x,y) \in R(G)$.

For the converse, suppose $(x,y) \in R(G)$.  This means there is  a path $p$ from $x$ to $y$ in the graph $G$.  Suppose $p_1, \cdots, p_k$ are paths in $G_{V_{1'}}, \cdots, G_{V_{k'}}$, respectively, that satisfy condition (U) for $p$.  That $p_i$ is a path in $G_{V_{i'}}$ means there is some element $(b_{i-1}, b_i)$ in $R(G_{V_{i'}})$ such that $p_i$ has $b_{i-1}, b_i$ as the starting point and end point, respectively.  Since $p_1$ starts at $x$ and $p_k$ ends at $y$, it follows that $(x,y)$ lies in the transitive closure of $\cup_{i=1}^k R(G_{V_{i'}})$ which in turn is contained in the transitive closure of $\cup_{i=1}^{n} R(G_{V_i})$.  Hence we are done.
\end{proof}

The next proposition explains how the common WD graph for a collection of sequences can be glued together from the common WD graphs of different restrictions of those sequences.

\begin{prop}\label{prop:gluing-common}
Let $X$ be a finite set.  Suppose $s^{(1)}, \cdots, s^{(p)}$ are simple sequences in $X$ and $V_1, \cdots,V_n$ are nonempty subsets of $X$.  Let $G$ denote the common WD graph of the sequences $s^{(1)}, \cdots, s^{(p)}$.  For each $1 \leq i \leq n$, let $G_i$ denote the common WD graph of the sequences $m_{V_i}(s^{(1)}), \cdots, m_{V_i}(s^{(p)})$.  If $G$ and $V_1, \cdots, V_n$ satisfy condition (U), then $G$ is the transitive reduction of
\begin{equation}\label{eq:gluingformula}
 \bigcup_{i=1}^n R(G_i).
\end{equation}
\end{prop}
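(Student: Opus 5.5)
The plan is to reduce Proposition \ref{prop:gluing-common} to Proposition \ref{prop:WDgluing-01}. Once we know $G_i = G_{V_i}$, the restriction of $G$ with respect to $V_i$ in the sense of Definition \ref{def:graphrestrictiontosubsest}, we are done. Indeed, $G$ is quasi-skeleton, being a transitive reduction, and by hypothesis $G$ and $V_1,\dots,V_n$ satisfy (U). So Proposition \ref{prop:WDgluing-01} says $G$ is the transitive reduction of $\bigcup_i R(G_{V_i}) = \bigcup_i R(G_i)$. Since a transitively closed DAG determines its transitive reduction uniquely, it suffices to prove the equality of relations
\[
R(G_i) = R(G) \cap (V_i \times V_i) \quad \text{for each } i.
\]

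\textbf{Step 1: describe $R(G)$ and $R(G_i)$ explicitly.} By the construction of $G^\ast$ in the proof of Lemma \ref{lem:commonWDG}, for $a \neq b$ in $X$ we have $(a,b) \in R(G)$ if and only if, for every $1 \le j \le p$, both $a,b$ appear in $s^{(j)}$ with $a$ before $b$. This relation is transitive and antisymmetric, so it is a genuine poset after adding the diagonal, and $G$ is its transitive reduction. In the same way, $G_i$ is the common WD graph of the sequences $m_{V_i}(s^{(j)})$ with vertex set $V_i$. So for $a\neq b$ in $V_i$, $(a,b)\in R(G_i)$ if and only if, for all $j$, both $a,b$ appear in $m_{V_i}(s^{(j)})$ with $a$ before $b$.

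\textbf{Step 2: compare the two conditions.} The operation $m_{V_i}$ only deletes terms not in $V_i$ and preserves the relative order of the remaining terms. Hence for $a,b\in V_i$: $a$ and $b$ appear in $m_{V_i}(s^{(j)})$ with $a$ first if and only if they appear in $s^{(j)}$ with $a$ first. The defining conditions therefore coincide for pairs in $V_i\times V_i$. The diagonal pairs $(a,a)$ with $a\in V_i$ lie on both sides by reflexivity. This gives $R(G_i) = R(G)\cap(V_i\times V_i)$.

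\textbf{Step 3: identify $G_i$ with the restriction.} Because $R(G)\cap(V_i\times V_i)$ is a poset, its transitive reduction is $G_{V_i}$ by Definition \ref{def:graphrestrictiontosubsest}. Likewise $G_i$ is the transitive reduction of $R(G_i)$. Uniqueness of transitive reductions \cite{aho1972transitive} then gives $G_i=G_{V_i}$, and we conclude via Proposition \ref{prop:WDgluing-01} as explained above.

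The main obstacle is Step 1. It requires checking that the common WD graph genuinely has reachability relation equal to the ``for all $j$, $a$ precedes $b$'' relation. Lemma \ref{lem:commonWDG} guarantees uniqueness, but the identification with $G^\ast$ must be invoked carefully. One must confirm that this relation is a partial order, so that its transitive reduction has exactly it as reachability set. One must also confirm that the same description applies with vertex set $V_i$ rather than $X$. Once this is in place, the remaining steps are bookkeeping.
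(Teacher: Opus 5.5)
Your proposal is correct and follows essentially the same route as the paper: it reduces to Proposition~\ref{prop:WDgluing-01} by showing $G_i = G_{V_i}$ through equality of reachability relations, using that $m_{V_i}$ deletes terms outside $V_i$ without changing the relative order of the rest. Your identity $R(G_i) = R(G)\cap(V_i\times V_i)$ packages into one statement the two reachability inclusions that the paper proves separately.
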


\begin{proof}
By Proposition \ref{prop:WDgluing-01}, it suffices to show that $G_i = G_{V_i}$ for each $1 \leq i \leq n$.  To this end, fix an index $i$.  Since both $G_i$ and $G_{V_i}$ are  transitive reductions of DAGs with vertex set $V_i$,  it suffices to show that they have the same reachability.

Suppose there is a path from vertex $x$ to vertex $y$  in $G_i$.  Then $x,y$ both lie in $V_i$ and, for all $1 \leq j \leq p$, both  $x, y$  appear in $s^{(j)}$ with $x$ appearing as an earlier term than $y$.  Hence there must be a path from $x$ to $y$  in the graph $G$, and subsequently there is a path from $x$ to $y$ in the graph  $G_{V_i}$. 

Conversely, suppose there is a path from vertex $x$ to vertex $y$   in $G_{V_i}$.  This means $x, y$ both lie in $V_i$ and, for all $1 \leq j \leq p$, both $x, y$ appear in $s^{(j)}$ with $x$ appearing before $y$.  It follows that for all $1 \leq j \leq p$, both $x, y$ appear in $m_{V_i}(s^{(j)})$ with $x$ appearing before $y$, implying there is a path from $x$ to $y$ in $G_i$, and we are done.
\end{proof}

Theorem \ref{thm:main-01} below answers the following question:

\begin{center}
\fbox{\parbox{0.9\linewidth}{\centering
\emph{Suppose we run Algorithm \ref{algo:HC-v2} on a set of sequences and obtain an output $\mathcal{C} = \{ C_1, \cdots, C_l\}$.  Fix any $1 \leq i \leq l$.  For every $G \in C_i$, how is $G$ related to the common WD graph $G'$ of all the sequences that are consistent with $G$?}
}}
\end{center}
We show that when the elements of $C_i$ satisfy a ``non-overlap'' condition, the answer is that $G$ and $G'$ agree.

\begin{thm}\label{thm:main-01}
Let $X$ be a finite set and  $s^{(1)}, \cdots, s^{(p)}$ simple sequences in $X$.    Fix a positive integer $r$, and suppose that, with $t=100$ and $J=X$ as the inputs for Algorithm \ref{algo:HC-v2},  the output $\mathcal{C}$ contains 
\[
\overline{C} = \{ G_1, \cdots, G_m\},
\]
where the $G_i$  are quasi-skeleton WD graphs.  Suppose, in addition, that the following condition holds:
\begin{itemize}
    \item[(D)] For each $1 \leq i \leq p$, the sequence $s^{(i)}$ is consistent with exactly one of $G_1, \cdots, G_m$.
\end{itemize}
For each $1 \leq i \leq m$, define
\[
Q_i = \{ s^{(j)} : s^{(j)} \text{ is consistent with } G_i \}.
\]
Then $G_i$ coincides with the common WD graph of the sequences in $Q_i$.
\end{thm}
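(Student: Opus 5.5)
Write $G'_i$ for the common WD graph of the sequences in $Q_i$ (with $J=X$), as in Definition \ref{def:commonWDG}. Since quasi-skeleton WD graphs on $X$ are determined by their reachability sets $R(-)$, the plan is to show $R(G_i)=R(G'_i)$. One inclusion is immediate; the other will come from minimality of $\overline{C}$ in the graph $\mathcal{G}$ of line \ref{alg:HC-v2:line-4}.

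\emph{Step 0: $Q_i\neq\varnothing$.} Since $G_i\in\overline{C}\subseteq S$, some $S_j$ contains $G_i$, so $s^{(j)}$ is consistent with $G_i$.

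\emph{Step 1: $R(G_i)\subseteq R(G'_i)$.} Every sequence in $Q_i$ is consistent with $G_i$. Hence whenever there is a path $x\to y$ in $G_i$, each sequence in $Q_i$ contains $x$ before $y$. This means there is a path $x\to y$ in $G'_i$. So there is a morphism $G'_i\to G_i$ in $\mathcal{R}(X)$.

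\emph{Step 2: the reverse inclusion, by contradiction.} Suppose $R(G_i)\subsetneq R(G'_i)$, so $G'_i\neq G_i$. Let
\[
\overline{C}'=(\overline{C}\setminus\{G_i\})\cup\{G'_i\}.
\]
I will check that $\overline{C}'$ satisfies conditions (i)--(iii) of line \ref{alg:HC-v2:line-3}.

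\emph{Condition (i) and membership in $S$.} Clearly $|\overline{C}'|\le r$. Also $G'_i\in S$, because any $s^{(j)}\in Q_i$ is consistent with $G'_i$ by construction of the common WD graph.

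\emph{Condition (iii), coverage.} With $t=100$, every $s^{(j)}$ was consistent with some element of $\overline{C}$. Sequences covered by some $G_k$ with $k\neq i$ remain covered. Sequences in $Q_i$ are consistent with $G'_i$.

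\emph{Condition (ii), no morphisms.} This is the main point, and it is where (D) is used. Fix $k\neq i$.
\begin{itemize}
\item[(a)] Suppose there were a morphism $G_k\to G'_i$, i.e.\ $R(G'_i)\subseteq R(G_k)$. Combined with Step 1, this gives $R(G_i)\subseteq R(G_k)$, a morphism $G_k\to G_i$ inside $\overline{C}$. This contradicts condition (ii) for $\overline{C}$.
\item[(b)] Suppose there were a morphism $G'_i\to G_k$, i.e.\ $R(G_k)\subseteq R(G'_i)$. Consistency of a sequence with a graph depends only on the pairs in $R(-)$, and shrinking $R$ preserves consistency. So every sequence in $Q_i$, which is consistent with $G'_i$, would also be consistent with $G_k$. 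Since $Q_i\neq\varnothing$ by Step 0, some sequence would be consistent with both $G_i$ and $G_k$. This contradicts (D).
\end{itemize}

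\emph{Conclusion.} Thus $\overline{C}'$ is a vertex of $\mathcal{G}$, and $\overline{C}'\neq\overline{C}$ since $G'_i\neq G_i$. Moreover there is an arrow $\overline{C}'\to\overline{C}$: the element $G'_i$ maps to $G_i$ by Step 1, and each $G_k$ with $k\neq i$ maps to itself by the identity. This contradicts line \ref{alg:HC-v2:line-5}, since $\overline{C}\in\mathcal{C}$ has no incoming arrows. Hence $R(G_i)=R(G'_i)$, and uniqueness of transitive reductions gives $G_i=G'_i$.

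The step I expect to need the most care is condition (ii) for $\overline{C}'$, especially case (b). It relies on the monotonicity of consistency under morphisms, and that property must be stated carefully from the definition of consistency.
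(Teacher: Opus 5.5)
Your proposal is correct and follows essentially the same argument as the paper. The paper proves the easy inclusion $R(G_i)\subseteq R(G'_i)$, then obtains a contradiction by swapping $G_i$ in $\overline{C}$ for a graph with strictly larger reachability, verifying conditions (i)--(iii), ruling out the two morphism cases via condition (ii) for $\overline{C}$ and via (D) respectively, and exhibiting a forbidden arrow into $\overline{C}$ in $\mathcal{G}$. The one difference is the replacement graph: the paper swaps in $G_i^+$ (the transitive reduction of $G_i$ with a single added arrow $x\to y$), so it must check acyclicity using the posets $R(\stg(s))$, whereas you swap in the common graph $G'_i$ itself, which is already a quasi-skeleton graph in $S$; your Step 0 also makes explicit that $Q_i\neq\varnothing$, which the paper uses tacitly.
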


Note that condition (D) implies that the $Q_i$ form a partition of the set $\{ s^{(i)} : 1 \leq i \leq p\}$ of all the sequences.

 Conceptually, Theorem \ref{thm:main-01} says the following: when a set of data produces a collection of wiring diagram graphs $G_1, \cdots, G_m$ summarizing the concepts within the data, if the data points corresponding to distinct $G_i$ form clusters (the $Q_i$) that are disjoint, then each $G_i$ agrees with the common WD graph extracted from its own cluster.

\begin{proof}
Fix an index $1 \leq i \leq m$.  Let $G_i^\ast$ denote the common WD graph of the elements of $Q_i$.  Our goal is to show $G_i^\ast = G_i$.  Since $G_i, G_i^\ast$ are both transitive reductions of DAGs, it suffices to show they have the same reachability.

Suppose there is a path from vertex $x$ to vertex $y$ in $G_i$.  This means that for all $s \in Q_i$, both $x, y$ appear in $s$ with $x$ as the earlier term.  Hence there must be a path from $x$ to $y$ in $G_i^\ast$ by the definition of $G_i^\ast$.

Conversely, suppose there is a path from vertex $x$ to vertex $y$ in $G_i^\ast$.  This means that $x$ appears before $y$ in every sequence $s \in Q_i$, and so $(x,y) \in R(\stg (s))$ for all $s \in Q_i$. For the sake of contradiction, suppose there is no path from $x$ to $y$ in the graph $G_i$.  Let $\widetilde{G_i^+}$ denote the directed graph obtained by adding an arrow from $x$ to $y$ in the graph $G_i$.  For all $s \in Q_i$, by the construction of $Q_i$ we know $s$ is consistent with $G_i$; then by \cite[Corollary 5.15]{LJ-D2WD}, we know there is a morphism $\stg (s) \to G_i$ in the category $\mathcal{R}(J)$, and so $R(G_i) \subseteq R(\stg (s))$.  Now 
$R(\stg (s))$ is a poset that contains the subset $R(G_i)$ as well as the element $(x,y)$.  This means that adding an arrow from $x$ to $y$ in the graph $G_i$ would not yield a cycle (we are assuming $x \neq y$), i.e.\ $\widetilde{G_i^+}$ is a DAG.  Let $G_i^+$ denote the transitive reduction of $\widetilde{G_i^+}$.  Note that $R(G_i) \subsetneq R(G_i^+)$ because $(x,y) \in R(G_i^+) \setminus R(G_i)$.

Let $\overline{C}^+$ denote the set obtained by replacing $G_i$ with $G_i^+$ in $\overline{C}$, i.e.\ 
\[
\overline{C}^+ = (\overline{C} \setminus \{ G_i \}) \cup \{ G_i^+\}.
\]
Note that $|\overline{C}^+| \leq r$ since $|\overline{C}|\leq r$.  Also, for every $s \in Q_i$, since $G_i^\ast$ is the common graph of the sequences in $Q_i$ and there is a path from $x$ to $y$ in $G_i^\ast$, $x$ must appear before $y$ in the sequence $s$; together with the fact that $s$ is consistent with $G_i$ (by the definition of $Q_i$), it follows that $s$ is consistent with $\widetilde{G_i^+}$, and hence consistent with $G_i^+$.  Hence
 \[
       \frac{\left| \{ i \in \mathbb{Z} : 1 \leq i \leq p, S_i \cap \overline{C}^+ \neq \varnothing \}\right|}{p} \geq t\% =100\%.
       \]
That is, conditions (i) and (iii) in line \ref{alg:HC-v2:line-3} of Algorithm  \ref{algo:HC-v2} are satisfied for  $\overline{C}^+$.  We now show condition (ii) also holds for $\overline{C}^+$.

By the construction of $\overline{C}$, if there is a morphism in the category $\mathcal{R}(J)$ between two elements of $\overline{C}^+$, it must be between $G_i^+$ and some $G_j$ where $j \neq i$.  We divide into two cases:
\begin{itemize}
    \item[(i)] There is a morphism $G_j \to G_i^+$: this implies   $R(G_i) \subsetneq R(G_i^+) \subseteq R(G_j)$, contradicting condition (ii) in line 3 of Algorithm  \ref{algo:HC-v2}.
    \item[(ii)] There is a morphism $G_i^+ \to G_j$: this implies  $R(G_j)  \subseteq R(G_i^+)$.  Note that  every $s \in Q_i$ is consistent with $G_i^\ast$ by construction of $G_i^\ast$, and so every $s \in Q_i$ is consistent with $G_j$, contradicting assumption (D).
\end{itemize}
Therefore, there are no morphisms between any two elements of $\overline{C}^+$ in the category $\mathcal{R}(J)$, i.e.\ $\overline{C}^+$ satisfies condition (ii) in line \ref{alg:HC-v2:line-3} of Algorithm  \ref{algo:HC-v2}, and so $\overline{C}^+$ is one of the vertices in the graph $\mathcal{G}$ in line \ref{alg:HC-v2:line-4} of Algorithm  \ref{algo:HC-v2}.

Since $R(G_i) \subsetneq R(G_i^+)$, there is an arrow  $\overline{C}^+ \to \overline{C}$ in  $\mathcal{G}$.  This, however, contradicts the condition that any element in the output $\mathcal{C}$ of Algorithm \ref{algo:HC-v2} (in particular, the element $\overline{C}$)   has no arrows pointing to it in $\mathcal{G}$.  Hence the assumption that there is no path from $x$ to $y$ in $G_i$ must be false, and we are done.
\end{proof}

\section{Example: gluing wiring diagrams from figure skating data}\label{sec:maineg}

In this section, we give an extended example  that showcases how the theory in Section \ref{sec:theory-gluingWDs} can be implemented in practice.

\begin{eg}\label{eg:main-01}
Let $X= \{e_1, \cdots, e_8\}$ where the $e_i$ are as in Section \ref{sec:intro-FSjumps}.  For this example, let us assume that we have sequences $s^{(1)}, \cdots, s^{(p)}$ for some positive integer $p$, extracted from  videos of figure skaters performing jumps, and that the sequences were extracted using classifiers for the four pairs of features in \ref{para:intro-testingth}, with respect to staged ordering as in \ref{para:eventseq_construction}.  We also assume the sequences represent  ``perfect data'' in the sense described below.   Let $S = \{ s^{(i)} : 1 \leq i \leq p \}$ denote the set of all the sequences.  
\begin{enumerate}
    \item If a sequence $s^{(i)}$ comes from a video of a skater performing a particular type of jump, then $\stg (s^{(i)})$ is a flattening of the WD graph for that jump as in \eqref{eq:jumpWDs}.  
    In particular, $s^{(i)}$ is consistent with the WD graph for that jump.
    \item Given the ideal WD graph of any jump as in \eqref{eq:jumpWDs}, and given any flattening $H$ of that WD graph,  there is at least one sequence $s$ in $S$ such that $\stg (s)$ coincides with $H$.  Loosely speaking, this means the set $S$ contains all the possible sequences that can  occur under staged ordering.    
\end{enumerate}  
Now run  Algorithm  \ref{algo:HC-v2} on the sequences in $S$ with $X$ as above and taking $J$ to be 
\[
V_1 = \{ e_1, e_2, e_5, e_6, e_7, e_8\},
\]
 $r=5$, and $t=100$.  Since we are assuming $S$ consists of perfect data, the output $\mathcal{C}$ contains  a  set whose  elements are  graphs corresponding to toe loop,  salchow, loop, and axel, together with an additional graph $H_{fl}$ that represents the commonality between flip and lutz (since the events in $V_1$ are not sufficient to differentiate between flip and lutz, which are distinguished by inside/outside edge  at takeoff) - see Figure \ref{fig:fsWDGs-V1}.  Note that condition (D) in Theorem \ref{thm:main-01} is satisfied, and so we can  partition $S$ into disjoint subsets that correspond to toe loop, loop, salchow, axel, and one subset $Q_{fl}$ that contains all the sequences that come from either flip or lutz videos.   Theorem \ref{thm:main-01} now says $H_{fl}$ is precisely the common WD graph of the sequences in $Q_{fl}$ with respect to the set $V_1$.

\begin{center}
  \includegraphics[scale=0.55]{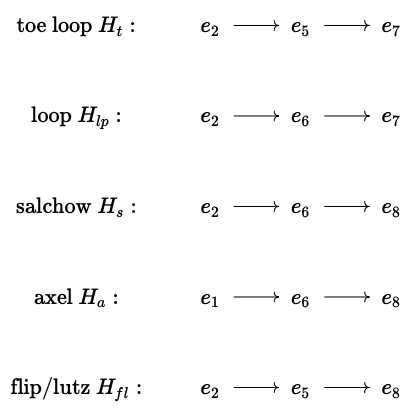}
  \captionof{figure}{WD graphs for jumps using events in $V_1$.}
  \label{fig:fsWDGs-V1}
\end{center}

Next, run  Algorithm  \ref{algo:HC-v2} again, with the same $X$ as before but on the sequences in $Q_{fl}$ and with $J$ taken to be
\[
 V_2 = \{ e_1, e_2, e_3, e_4, e_5, e_6\},
\]
$r=2$, and $t=100$.  The output $\mathcal{C}$ contains   a set with two elements $H'_f, H'_{lz}$ that correspond to flip and lutz, respectively.  Since we are assuming $S$ contains perfect data, condition (D) in Theorem \ref{thm:main-01} is again satisfied; writing $Q_f, Q_{lz}$ to denote the subsets of $Q_{fl}$ consisting of sequences that are consistent with $H'_f, H'_{lz}$, respectively, Theorem \ref{thm:main-01} says $H'_f, H'_{lz}$ coincide with the common WD graphs of sequences in $Q_f, Q_{lz}$, respectively.

\begin{center}
  \includegraphics[scale=0.55]{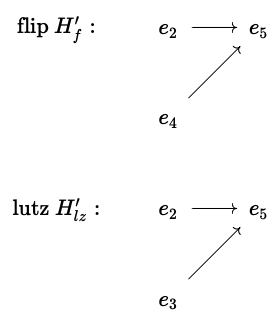}
  \captionof{figure}{WD graphs for flip and lutz using events in $V_2$.}
  \label{fig:fsWDGs-V2}
\end{center}

Finally, note that under staged ordering,  every sequence $s$ in $S$ has the property that every path in $\stg (s)$ is a concatenation of a path in $(\stg (s))_{V_2}$ and a path in $(\stg (s))_{V_1}$.  That is, for any $s \in S$, the graph $\stg (s)$ and $V_1, V_2$ together satisfy condition (U) in \ref{para:pathglueingcond}.  Therefore, by Proposition \ref{prop:gluing-common}, we can recover the complete WD graphs for flip and lutz shown in \eqref{eq:jumpWDs} using the gluing formula \eqref{eq:gluingformula}:
\begin{itemize}
    \item The WD graph for flip, written in terms of the full set of events $X$, is the transitive reduction of $R(H_{fl}) \cup R(H'_f)$ which is 
    \begin{equation}\label{eq:flip-recovered}
    \xymatrix{
    e_2 \ar[r] & e_5 \ar[r] & e_8 \\
    e_4 \ar[ur] & & 
    } 
    \end{equation}
    \item The WD graph for lutz, written in terms of the full set of events $X$, is the transitive reduction of $R(H_{fl}) \cup R(H'_{lz})$ which is 
    \begin{equation}\label{eq:lutz-recovered}
    \xymatrix{
    e_2 \ar[r] & e_5 \ar[r] & e_8 \\
    e_3 \ar[ur] & & 
    } 
    \end{equation}
\end{itemize}
Note that the WD graphs obtained above via gluing agree with the ideal WD graphs for flip and lutz in \eqref{eq:jumpWDs}. 
\end{eg}

\paragraph[Comparison with results in \ref{para:HCactualresults}]  The WD graphs that are produced under the first iteration (Figure \ref{fig:fsWDGs-V1}) and the second iteration (Figure \ref{fig:fsWDGs-V2}) coincide with the WD graphs obtained from our actual data in Tables \ref{tab:WDfrom1stHC} and \ref{tab:WDfrom1stHC-2}.  Therefore, if we apply the gluing procedure outlined in the last part of Example \ref{eg:main-01} to the actual data in Section \ref{para:HCactualresults}, we would obtain WD graphs \eqref{eq:flip-recovered} and \eqref{eq:lutz-recovered} which are exactly the ideal graphs for those jumps in \eqref{eq:jumpWDs}.  This demonstrates that when applied to real data, the theory  for gluing wiring diagrams from iterative Hasse clustering developed in Section \ref{sec:theory-gluingWDs}  can produce wiring diagrams with  greater detail than those produced by a single application.

\section{Concluding remarks}\label{sec:conclusion}

\paragraph[Summary] Hasse clustering is an algorithm developed in \cite{LJ-D2WD} for extracting graphical representations of concepts - called wiring diagrams - from sequential data.  In this article, we considered the problem of applying Hasse clustering  to a dataset iteratively  and synthesizing the results.  We developed a theory for gluing wiring diagrams produced during  iterative Hasse clustering, and demonstrated the implementation  of this theory using 3D motion-capture data in figure skating.  We trained specialist algorithms to recognize technical features of figure skating jumps; these specialist algorithms then allowed us to convert videos of figure skating jumps to sequences of events. Hasse clustering then sorted these sequences into clusters with high accuracy, and our theory of gluing allowed us to recover the full wiring diagrams of these clusters.

\paragraph[Further applications]  Although the primary motivation for iterative Hasse clustering in this article was to overcome the combinatorial complexity of our algorithm, our method could also be used to synthesize datasets collected and analyzed in separate batches at different points in time.

In addition, despite the focus on figure skating data in this article, the framework of analyzing data using wiring diagrams and Hasse clustering  applies to any other context in which data can be translated to sequences of events.  In sports, for example, analyzing sequences of movements can be useful for training athletes  and developing on-court strategies in basketball \cite{wang2024basketball}.  Depending on whether the events of interest that feed into Hasse clustering are lower-level body movements or higher-level plays, our method could potentially be used to classify and detect low-level actions or high-level strategies in a game.

More generally, human comprehension and learning rely on action segmentation, which is the fundamental cognitive mechanism of extracting distinct events from a continuous stream of dynamic behavior \cite{baldwin2008segmenting}.  Our method for extracting patterns in sequences for further analysis can be layered on top of action segmentation in the design of reasoning and learning frameworks for autonomous systems.

\paragraph[Data and code availability]  The dataset used in this study, FS-Jump3D, is publicly available from its original repository \cite{tanaka2024fsjump3d}. The complete implementation of our framework is available through two GitHub repositories. The repository Figure-Skating-Jump-Analysis-Pipeline \cite{jafari2026figureskating} contains the full pipeline for processing the FS-Jump3D dataset, extracting the four jump features, training the classifiers, generating symbolic event sequences, and reproducing the experiments described in this paper. The repository Hasse-Clustering \cite{jafari2026hasseclustering} contains the implementation of the new and legacy Hasse clustering algorithm used to generate the wiring diagrams and clustering results. Together, these repositories provide a complete implementation of the methodology presented in this article and enable researchers to reproduce the reported experiments and apply the framework to other sequential datasets.

\paragraph[Acknowledgments] This material is based upon work supported by the Air Force Office of Scientific Research under award 
number FA9550-24-1-0268.

\end{multicols}

\bibliography{refsMR2}{}
\bibliographystyle{plain}

\end{document}